\documentclass[conference]{IEEEtran}
\IEEEoverridecommandlockouts

\usepackage{cite}
\usepackage{amsmath,amssymb,amsfonts}
\usepackage{algorithmic}
\usepackage{graphicx}
\usepackage{textcomp}
\usepackage{xcolor}
\def\BibTeX{{\rm B\kern-.05em{\sc i\kern-.025em b}\kern-.08em
    T\kern-.1667em\lower.7ex\hbox{E}\kern-.125emX}}

\def \R {\mathbb{R}}

\def \Spd {\mathrm{SPD}}

\def \F {\mathrm{F}}

\def \calI {\mathcal{I}}

\def \calO {\mathcal{O}}

\def \calX {\mathcal{X}}

\def \tr {\mathrm{tr}}

\usepackage{amsthm}

\usepackage{url}

\newtheorem{proposition}{Proposition}
\newtheorem{corollary}{Corollary}
\newtheorem{remark}{Remark}

\newtheorem{definition}{Definition}
\newtheorem{theorem}{Theorem}
\usepackage{cleveref} 
\usepackage{mathtools} 
\usepackage{booktabs} 

\begin{document}

\title{Expressivity of congruence-based architectures for DNNs on positive-definite matrices}

\author{\IEEEauthorblockN{Antonin Oswald}
\IEEEauthorblockA{\textit{ICTEAM, UCLouvain} \\
Louvain-la-Neuve, Belgium \\
antonin.oswald@uclouvain.be}
\and
\IEEEauthorblockN{Estelle Massart}
\IEEEauthorblockA{\textit{ICTEAM, UCLouvain} \\
Louvain-la-Neuve, Belgium \\
estelle.massart@uclouvain.be}
}
\maketitle

\begin{abstract} 
This work studies neural architectures for classifying symmetric
positive-definite matrices, focusing on congruence-like layers, in which the
input matrix is multiplied on the left and right by a (possibly rectangular)
weight matrix $\boldsymbol{W}$ and its transpose. Such layers lie at the
core of the celebrated SPDNet and have also been employed independently for dimensionality reduction on positive-definite data. We show that the (semi)-orthogonality constraint commonly imposed on $\boldsymbol{W}$ limits the expressivity of these
layers: for certain activation functions, the resulting architecture collapses
to a one-hidden-layer equivalent. This lack of expressivity follows from a loss of
spectral diversity in congruence-like layers for semi-orthogonal $W$ and is a direct consequence of Poincar\'e's
separation theorem. We then examine the choice of the final classifier,
comparing several Riemannian classifiers and discussing their compatibility with
the feature maps produced by congruence-like layers.
\end{abstract}

\begin{IEEEkeywords}
 Deep learning, Symmetric positive-definite matrices, Riemannian geometry, Machine learning.
\end{IEEEkeywords}

\section{Introduction}
Symmetric and positive-definite (SPD) matrices arise naturally in a wide range of scientific and engineering applications, where they encode essential second-order information such as correlations between signals or covariances of random variables. Applications range from medical imaging and brain-computer interfaces \cite{Barachant2012,Massart2017Inductive} to pedestrian recognition \cite{videoSurveillance},  image set classification \cite{visual_reco_wang, action_reco_ionescu} and radar data processing \cite{cabanes2019}. This ubiquity of SPD matrices motivated the design of machine learning algorithms specifically dedicated to this type of data. In particular, the set of SPD matrices naturally inherits a Riemannian manifold structure \cite{pennec_riemannian_2006}, allowing for the use of Riemannian extensions of usual machine learning techniques.

Various classification methods were proposed for SPD data. The first ones either rely on local linearizations of the manifold \cite{Tuzel2008RiemannianPedestrian}, or assign each point to the class with the closest Fréchet mean, for several choices of metrics \cite{Barachant2012,Massart2017Inductive,cabanes2019}. More recent methods include Riemannian extensions of Gaussian mixture models \cite{said_2017}, kernel functions \cite{mri,rkhqHarandi,steinert2025} and deep neural networks (DNNs)   \cite{classicalSPDNET,nguyen_matrix_2024,nguyen_building_2023, lopez_vector-valued_2021}. 

Among DNNs, the celebrated SPDNet architecture was widely adopted by the community \cite{classicalSPDNET}. This architecture is made of three different types of layers. The first is the BiMap layer, which applies a congruence-like operation to its incoming matrix $X \in \Spd(n)$, where $\Spd(n)$ refers to the manifold of $n \times n$ positive-definite matrices: 
\begin{equation} \label{eq:congruence-like}
    X \mapsto WXW^\top,
\end{equation}
for some full-rank $W^\top \in \R^{n \times p}$ ($p \leq n$). To mitigate optimization instabilities caused by the fact that the set of full-rank $n \times p$ matrices is open \cite[Sec 2.6]{boumal2023intromanifolds}, \cite{classicalSPDNET} enforces a semi-orthogonality constraint on the weight matrix $W$, i.e., $WW^\top = I_p$. The second type of layer is the ReEig layer, which  applies a ReLU activation function on the eigenvalues of the incoming matrix: 
\begin{equation} \label{eq:reeig}
X \mapsto U \phi(\Lambda) U^\top,
\end{equation}
where $X = U \Lambda U^\top $ is an eigenvalue decomposition and $\phi(\Lambda) : \Lambda \mapsto \max(\epsilon I, \Lambda)$ is the ReLU function (shifted by $\epsilon$), applied to the eigenvalues. These two layers are alternated several times, and followed by a LogEig layer, which maps its incoming SPD matrix to a symmetric  matrix by replacing the eigenvalues by their logarithm. This matrix is then fed  into classical Euclidean network layers for classification. 

The main feature of the BiMap and ReEig layers is the fact that they preserve the SPD structure of the data. These layers can thus be seen as a structure-preserving feature extraction block. Note that the congruence-like transformation \eqref{eq:congruence-like} was already proposed as a dimensionality reduction mechanism on SPD matrices \cite{horev2015,harandi_dimensionality_2018}, where the parameter $W$ was trained in order to discriminate as well as possible the data, which could also be seen as a (shallow) structure-preserving feature extraction mechanism. On the other hand, and as highlighted in \cite{Wang_2022_ACCV}, the expressivity of the SPDNet architecture is still poorly understood, which is the focus of this paper.


Note also that, while SPDNet was subsequently endowed with a batch normalization strategy \cite{brooks_riemannian_2019} and is at the core of recent extensions of U-Nets and auto-encoders to SPD data \cite{WANG2023382_unet,boucherie:hal-05600912}, several works proposed concurrent architectures by leveraging the hyperbolic geometric structure of SPD data through gyrovector space theory \cite{nguyen_matrix_2024,nguyen_building_2023, lopez_vector-valued_2021}. The authors of \cite{Chakraborty2018ManifoldNetAD} generalized convolutional neural networks to manifold–valued data by defining convolution via weighted Fréchet means. More broadly, the research on DNN architecture for manifold-valued data is an active topic, and falls under the umbrella of \emph{geometric deep learning} \cite{bronstein_2017}. \\

\noindent \textbf{Contributions:} 
This paper studies the congruence-like transformation
\eqref{eq:congruence-like}, and characterizes how classically recommended constraints on its weight matrices affect expressivity. In \Cref{sec:Ortho,sec:semiortho}, we show that for certain
activation functions, restricting $W$ to be (semi-)orthogonal makes the architecture equivalent to a one-hidden-layer network regardless of depth, a degeneracy that follows from the loss of spectral diversity implied by Poincar\'e's separation theorem. In \Cref{sec:classification}, we then study the final classifier, comparing several candidates, including Riemannian ones, through the discriminatory properties of congruence-like layers under the
associated metrics.

\noindent \textbf{Notations:} $\mathrm{SPD}(n)$, $\mathrm{GL}(n)$ and $\mathcal{O}\left(n\right)$ are respectively the sets of $n \times n$ SPD, invertible and orthogonal matrices. We write $\mathrm{St}(n,p) \coloneqq \left\{Y \in \R^{n \times p} \ \, | \, Y^\top Y = I_p \right\}$, with $p \leq n$, the Stiefel manifold. We introduce the notation $ \calI_{\mathrm{spec}}\left(A\right) \coloneq \left[\lambda_{\min}\left(A\right), \, \lambda_{\max}\left(A\right)\right]$, for the interval between the smallest and largest eigenvalue of $A$. The Frobenius norm is denoted as $\left\| A \right\|_F = \left(\tr \left(A^\top A \right) \right)^{1/2}$. 


 \section{Problem formulation} \label{sec:prob}

 We address the following classification problem. Given a collection of training inputs $X_1, \dots, X_N$, with $X_i \in \Spd(d)$, and associated class label $y_i \in \R^K$ (assumed without loss of generality to be one-hot encoded), our goal is to learn a neural network $\phi : \mathrm{SPD}(d) \to \R^K : X^0 \mapsto \phi(X^0)$, that fits as well as possible these datapoints in regard to some predefined loss function, and whose architecture is characterized by signal propagation equations of the form:
 \begin{equation} \label{eq:NNarch}
    \begin{split}
     X^\ell &= g(W_\ell X^{\ell-1} W_\ell^{\top}), \qquad \ell = 1, \dots L\\
     \phi(X^0) &= f(X^L),
     \end{split}
 \end{equation}
for some weight matrices $W_\ell \in \R^{d^\ell \times d^{\ell-1}}$ for $\ell = 1, \dots L$, with $d^0 = d$, some classifier $f : \Spd(d_L) \to \R^K$ and an activation function $g : \Spd(n) \rightarrow \Spd(n)$, which we assume here to be identical across layers\footnote{While the dimension of the input and output spaces of $g$ vary in general, we keep the same notation throughout the paper for the simplicity of notation.}. We assume here this activation function to be a (primary) matrix function\footnote{Note that this differs from applying a scalar activation function $\sigma : \R \to \R$ to all entries of $A$, which is another way to define matrix functions that we do not consider here.} as defined in \cite{higham_functions_2008}, i.e., to satisfy $g(A) = U g(\Lambda) U^\top$, with $U\Lambda U^\top$ an eigenvalue decomposition of $A$, and
\begin{equation} \label{eq:matrix_activation_function}
    g(\Lambda)_{i,j} = \left \{ \begin{array}{cl}
    0  & \text{if} \; i \neq j\\
    \sigma(\Lambda_{i,i}) & \text{otherwise}, \\
\end{array} \right.
\end{equation}
where $\sigma : \R \to \R$ is a scalar function.  In particular, this encompasses the ReEig layer used in \cite{classicalSPDNET}.

In other words, and as illustrated on \Cref{fig:featureExtractor}, we assume our model to be made of two blocks: a structure-preserving feature extractor block, that alternates between congruence-like transformations and the nonlinear matrix function $g$, and a final classifier $f$. For instance, this classifier could be any Euclidean classifier applied to a mapping of $X^L$ onto a linear space (e.g., through the LogEig layer in \cite{classicalSPDNET}), or any Riemannian classifier (e.g., the minimum-distance-to-the-mean classifier used in \cite{Barachant2012,Massart2017Inductive,cabanes2019}).

  \begin{figure}
     \centering
     \includegraphics[width=\linewidth]{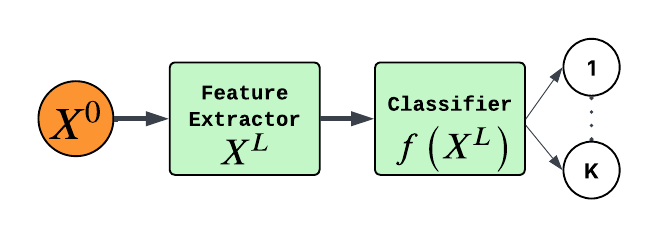}
     \caption{Illustration of the architecture \eqref{eq:NNarch} of depth $L$.}
     \label{fig:featureExtractor}
 \end{figure}

 \section{Model expressivity for orthogonal weight matrices} \label{sec:Ortho}
In this section, we prove that, if all layers are square (i.e., $d^\ell = d$ for $\ell = 1, \dots, L$) and if the weight matrices $W_1, \dots, W_L$ are orthogonal matrices, there exists a one-hidden-layer neural network $\Tilde{\phi} : \Spd(d) \to \R^K$ that is equivalent to \eqref{eq:NNarch}, i.e., such that $\phi(X) = \Tilde{\phi} (X)$ for all $X \in \Spd(d)$. 

 \begin{proposition} \label{prop:orthogonalExpr}
     Assume that $d^\ell = d$ and $W_\ell \in \calO(d)$ for $\ell = 1, \dots, L$ in \eqref{eq:NNarch}. Let us define the one-hidden-layer network $\Tilde{\phi} : \mathrm{SPD}(d) \to \R^K : X^0 \mapsto \Tilde{\phi}(X^0)$  such that
     \begin{equation} \label{eq:NNarchShort}
         \begin{split}
              \tilde X^1 & = g^{(L)}\left( \Tilde{W}_L X^0 \Tilde{W}_L^{\top}\right), \\
             \Tilde{\phi}(X^0) &= f\left( \tilde X^1\right),
         \end{split}
     \end{equation}
     where $\Tilde{W}_L = \left(W_{L}\dots W_2 W_1\right) \in \calO(d)$ and with activation function $g^{(L)}$ being $g(\cdot)$ composed $L$ times with itself. Then, 
     \[ \phi\left(X\right) = \Tilde{\phi}\left(X\right) \qquad \forall X \in \Spd(d).\]
 \end{proposition}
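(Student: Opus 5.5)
The key fact is that a primary matrix function commutes with orthogonal congruence: for any $Q \in \calO(d)$ and any symmetric $A$,
\begin{equation*}
g(QAQ^\top) = Q\, g(A)\, Q^\top .
\end{equation*}
To see this, take an eigenvalue decomposition $A = U\Lambda U^\top$. Then $QAQ^\top = (QU)\Lambda(QU)^\top$, and since $QU$ is orthogonal this is an eigenvalue decomposition of $QAQ^\top$. By the definition of $g$ in \eqref{eq:matrix_activation_function}, $g(QAQ^\top) = (QU) g(\Lambda) (QU)^\top = Q g(A) Q^\top$.

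The one subtle point is well-definedness. When $A$ has repeated eigenvalues its eigenvectors are not unique, so I should note that primary matrix functions do not depend on the chosen decomposition. This holds because $g(A)$ equals $p(A)$ for an interpolating polynomial $p$ of $\sigma$ on the spectrum of $A$ \cite{higham_functions_2008}. With that, the identity is immediate, since $p(QAQ^\top) = Q p(A) Q^\top$ and the spectrum is unchanged. I expect this to be the only step requiring any care. Iterating the identity also gives $g^{(k)}(QAQ^\top) = Q g^{(k)}(A) Q^\top$ for every $k$.

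The main argument is an induction on $\ell$. The claim is that for every $\ell = 1,\dots,L$,
\begin{equation*}
X^\ell = g^{(\ell)}\big(\tilde W_\ell X^0 \tilde W_\ell^\top\big), \qquad \tilde W_\ell = W_\ell \cdots W_1 .
\end{equation*}
For $\ell = 1$ this is just the definition of the first layer. For the inductive step, assume the claim at $\ell$ and compute
\begin{align*}
W_{\ell+1} X^\ell W_{\ell+1}^\top
&= W_{\ell+1}\, g^{(\ell)}\big(\tilde W_\ell X^0 \tilde W_\ell^\top\big)\, W_{\ell+1}^\top \\
&= g^{(\ell)}\big(\tilde W_{\ell+1} X^0 \tilde W_{\ell+1}^\top\big),
\end{align*}
where the second equality uses the iterated equivariance with $Q = W_{\ell+1}$. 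Applying $g$ once more gives $X^{\ell+1} = g^{(\ell+1)}\big(\tilde W_{\ell+1} X^0 \tilde W_{\ell+1}^\top\big)$, which completes the induction.

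Taking $\ell = L$ gives $X^L = \tilde X^1$, and therefore $\phi(X^0) = f(X^L) = f(\tilde X^1) = \tilde\phi(X^0)$ for every $X^0 \in \Spd(d)$. Finally, $\tilde W_L \in \calO(d)$ because a product of orthogonal matrices is orthogonal.
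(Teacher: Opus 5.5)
Your proposal is correct and follows essentially the same route as the paper: an induction on depth that rests on the equivariance $g(QAQ^\top) = Q\,g(A)\,Q^\top$ for orthogonal $Q$. The paper simply cites \cite[Thm. 1.13]{higham_functions_2008} for this fact, whereas you verify it directly and also address well-definedness under repeated eigenvalues via the interpolating polynomial.
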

 \begin{proof} 

        Note that there suffices to show that, for all $L$ and for all $X^0 \in \Spd(d)$, 
        \[ X^L = \tilde X^1, \]
        where $X^L$ and $\tilde X^1$ are defined in \eqref{eq:NNarch} and \eqref{eq:NNarchShort}, respectively. The proof is by induction. Trivially, the result holds for $L=1$.
        For the induction step, assuming that the claim holds for a depth $L$, there holds by \eqref{eq:NNarch}
        \begin{equation*}
            \begin{split}
        X^{L+1}&=g\left(W_{L+1} X^L W_{L+1}^{\top} \right) \\
        &= g \left(W_{L+1} g^{(L)}\left( \Tilde{W}_L X^0 \Tilde{W}_{L}^{\top} \right) W_{L+1}^{\top} \right) \\
        & = g^{(L+1)} \left( \Tilde{W}_{L+1}  X^0 \Tilde{W}_{L+1}^{\top}\right), 
        \end{split}
        \end{equation*}
        where the last equality results from our definition of $g$ as matrix function, which implies that $g$ is invariant under congruence with orthogonal matrices; see \cite[Thm. 1.13]{higham_functions_2008}. \qedhere \end{proof}
        
 As a consequence, when all layers in \eqref{eq:NNarch} are square and all weight matrices are orthogonal, the expressivity of the neural network architecture \eqref{eq:NNarch} always reduces to the one of a one-hidden-layer neural network, with depth-depending activation function $g^{(L)}$. The following remark illustrates a possible drop in expressivity for some choices of activation functions.
 
 \begin{remark}
 Assume that $\sigma : \R \to \R$, defined in \eqref{eq:matrix_activation_function}, is a contractive function, i.e., 
 \begin{equation} \label{eq:contrastive}
     |\sigma(x)-\sigma(y)| \leq c |x-y| \qquad \forall x,y \in \R,
 \end{equation} 
 for some positive constant $c < 1$. Then, by the Banach fixed-point theorem, $\lim_{L \to \infty} \sigma^{(L)}(x) = a$ for some unique fixed-point $a \in \mathbb{R}$. Our definition of matrix function $g$ in \eqref{eq:matrix_activation_function} implies then that the spectrum of $X^L$ defined in \eqref{eq:NNarch} becomes more and more uniform as $L$ increases, converging ultimately to some constant value. This will occur for instance when $\sigma$ is the sigmoid, i.e., the mapping $x \mapsto \left(1+\exp (-x) \right)^{-1}$, which satisfies \eqref{eq:contrastive} for  $c= 0.25$.
\end{remark}

 A stronger version of Proposition \ref{prop:orthogonalExpr} can be derived for idempotent activation functions, which satisfy the property $g\left(g\left(X\right)\right) = g \left(X\right)$ for all $X$ in their domain. Note that such idempotent functions can be obtained by choosing $\sigma$ in \eqref{eq:matrix_activation_function} as a scalar idempotent function, which includes the \emph{ReLU} (as in  the ReEig layer in \cite{classicalSPDNET}), and \emph{hard tanh} \cite{list_of_activation_functions}. 

\begin{corollary} \label{corr:orthogonal}
    Under the assumptions of \Cref{prop:orthogonalExpr}, let us define the one-hidden-layer network $\Tilde{\phi} : \mathrm{SPD}(d) \to \R^K : X^0 \mapsto \Tilde{\phi}(X^0)$  such that
     \begin{equation*}
         \begin{split}
             \tilde X^1 &= g\left( \Tilde{W}_L X^0 \Tilde{W}_L^{\top}\right), \\
             \Tilde{\phi}(X^0) &= f\left(\tilde X^1\right),
         \end{split}
     \end{equation*} 
    where $\Tilde{W}_L = \left(W_{L}\dots W_2W_1 \right) \in \calO(d)$. If $g$ is idempotent, then
     \[ \phi\left(X\right) = \Tilde{\phi}\left(X\right) \qquad \forall X \in \Spd(d).\]
\end{corollary}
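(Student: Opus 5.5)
The plan is to reduce the statement directly to \Cref{prop:orthogonalExpr}. That proposition already gives, for all $X^0 \in \Spd(d)$, the identity $X^L = g^{(L)}\left(\Tilde{W}_L X^0 \Tilde{W}_L^{\top}\right)$ with $\Tilde{W}_L = W_L \cdots W_1 \in \calO(d)$. It therefore suffices to show that $g^{(L)} = g$ on the relevant domain whenever $g$ is idempotent. Composing with $f$ then yields $\phi(X) = f(X^L) = f\left(g(\Tilde{W}_L X \Tilde{W}_L^\top)\right) = \Tilde{\phi}(X)$.

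To show $g^{(L)} = g$, I will argue by induction on $L$. The case $L = 1$ is trivial. Suppose $g^{(L)} = g$. Then
\[ g^{(L+1)}(A) = g\left(g^{(L)}(A)\right) = g\left(g(A)\right) = g(A), \]
where the last equality is exactly the idempotence assumption $g(g(X)) = g(X)$.

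The only point needing care is domain bookkeeping. The map $g$ goes from $\Spd(d)$ to $\Spd(d)$, so each iterate $g^{(k)}(A)$ with $A = \Tilde{W}_L X^0 \Tilde{W}_L^\top \in \Spd(d)$ lies in the domain of $g$. Here $A$ is SPD because congruence by an orthogonal, hence invertible, matrix preserves positive-definiteness. Consequently, the idempotence identity can be applied at every step of the induction.

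No step here is a real obstacle; the argument is a one-line consequence of the proposition. If one wishes to connect this with the scalar description, it is also worth noting that idempotence of $g$ follows from idempotence of $\sigma$. Indeed, $g(g(A)) = U\,\sigma(\sigma(\Lambda))\,U^\top = U\,\sigma(\Lambda)\,U^\top$, because $g(A) = U\sigma(\Lambda)U^\top$ is itself an eigendecomposition. This justifies the examples of ReLU and hard tanh mentioned before the statement.
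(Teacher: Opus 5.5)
Your proposal is correct and follows the paper's proof: invoke \Cref{prop:orthogonalExpr} and collapse $g^{(L)}$ to $g$ using idempotence, $g(g(X)) = g(X)$. The domain check and the remark that idempotence of $\sigma$ lifts to $g$ are correct details that the paper leaves implicit.
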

\begin{proof}
    This results from \Cref{prop:orthogonalExpr} combined with the fact that $g\left(g\left(X\right)\right) = g\left(X\right)$ for all $X \in \mathrm{SPD}(d)$.
\end{proof}
When $g$ is idempotent, the expressivity of \eqref{eq:NNarch} is thus the expressivity of a one-hidden-layer neural network with the same activation function.
\begin{remark}
Although depth does not increase expressivity in this setting, it may still affect optimization and the final accuracy of the model. The resulting overparameterization could act as a preconditioner during training (i.e., accelerating the convergence of the weights), or as a  regularization mechanism; see \cite{arora18} and \cite{arora_2019} for related works in the Euclidean setting. 
\end{remark}


\section{Model Expressivity for semi-orthogonal weight matrices} \label{sec:semiortho}

We next address the case where the weight matrices in \eqref{eq:NNarch} are semi-orthogonal. The following classical result characterizes the action of semi-orthogonal matrices on SPD data through the congruence-like transformation \eqref{eq:congruence-like}. 

\begin{theorem}[Poincaré Separation Theorem, see \cite{bellman_introduction_1997}]

    \label{lemma:poincare}
    
    Let $A$ be an $n \times n$ real symmetric matrix, and $B \in \mathrm{St}(n,p)$. Let $\lambda_j$ be the $j$-th eigenvalue of $A$ for $j = 1,\dots,n$,  and $\mu_i$ the $i$-th eigenvalue of $B^\top A B$ for $i=1,\dots,p$ in descending order. Then,
    $$
        \lambda_i \geq \mu_i \geq \lambda_{i+n-p}, \quad i =1,\dots,p.
    $$
    That is, $\calI_{\mathrm{spec}}\left(B^\top A B\right) \subseteq \calI_{\mathrm{spec}}\left(A\right)$, where $\calI_{\mathrm{spec}} \left ( A \right)$ is the interval between the smallest and largest eigenvalue of $A$. 
\end{theorem}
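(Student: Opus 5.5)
I will prove the Poincaré separation theorem with the Courant--Fischer min-max characterization of eigenvalues. For a real symmetric $n\times n$ matrix $A$ with eigenvalues $\lambda_1\ge\dots\ge\lambda_n$,
\[
\lambda_i = \max_{\dim S = i}\ \min_{x\in S,\ \|x\|=1} x^\top A x = \min_{\dim T = n-i+1}\ \max_{x\in T,\ \|x\|=1} x^\top A x .
\]
Throughout, the key observation is that $B\in\mathrm{St}(n,p)$ is an isometry from $\R^p$ onto its range. Hence for any $y\in\R^p$ with $\|y\|=1$, the vector $x=By$ satisfies $\|x\|=1$ and $y^\top (B^\top A B) y = x^\top A x$. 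Moreover, $B$ maps every $k$-dimensional subspace of $\R^p$ injectively onto a $k$-dimensional subspace of $\R^n$.

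\textbf{Upper bound.} To get $\mu_i\le\lambda_i$, I take the max-min form for $\mu_i$. For any $i$-dimensional $S\subseteq\R^p$, the image $BS$ is an $i$-dimensional subspace of $\R^n$, and the Rayleigh quotients coincide, so
\[
\min_{y\in S,\ \|y\|=1} y^\top B^\top A B\, y=\min_{x\in BS,\ \|x\|=1}x^\top A x\le\lambda_i .
\]
Maximizing over $S$ gives $\mu_i\le\lambda_i$.

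\textbf{Lower bound.} To get $\mu_i\ge\lambda_{i+n-p}$, I use the min-max form. For $B^\top A B$ this reads $\mu_i=\min_{\dim T=p-i+1}\max_{y\in T,\|y\|=1} y^\top B^\top A B\,y$. For any such $T$, the image $BT$ has dimension $p-i+1=n-(i+n-p)+1$. This is exactly the subspace dimension appearing in the min-max formula for $\lambda_{i+n-p}$, so
\[
\max_{y\in T,\ \|y\|=1} y^\top B^\top A B\, y=\max_{x\in BT,\ \|x\|=1}x^\top A x\ge\lambda_{i+n-p} .
\]
Minimizing over $T$ gives the claim. Alternatively, the lower bound follows from the upper bound applied to $-A$, after reindexing the eigenvalues in reverse order.

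\textbf{Interval statement and main obstacle.} Taking $i=1$ in the upper bound gives $\mu_1\le\lambda_1$, and taking $i=p$ in the lower bound gives $\mu_p\ge\lambda_n$. Therefore $\calI_{\mathrm{spec}}(B^\top A B)\subseteq\calI_{\mathrm{spec}}(A)$. The only delicate point is the index bookkeeping in the lower bound: one must check that the subspace dimension $p-i+1$ matches the Courant--Fischer index $i+n-p$ for $A$. Everything else is immediate from the isometry property $B^\top B=I_p$.
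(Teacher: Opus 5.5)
The paper does not prove this theorem; it quotes it as a classical result from Bellman, so there is no in-paper argument to match. Your proof is correct and complete:
\begin{itemize}
\item The isometry $y\mapsto By$ maps the unit sphere of a subspace $S\subseteq\R^p$ onto the unit sphere of $BS$. It therefore identifies the Rayleigh quotients of $B^\top A B$ on $S$ with those of $A$ on $BS$.
\item The max--min form of Courant--Fischer then gives $\mu_i\le\lambda_i$.
\item The min--max form, with the index check $n-(i+n-p)+1=p-i+1$, gives $\mu_i\ge\lambda_{i+n-p}$.
\item The cases $i=1$ and $i=p$ yield $\calI_{\mathrm{spec}}(B^\top A B)\subseteq\calI_{\mathrm{spec}}(A)$.
\end{itemize}

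This is the standard variational proof. The other common route completes $B$ to an orthogonal matrix $Q=[B\;C]\in\calO(n)$. One then observes that $B^\top A B$ is the leading $p\times p$ principal block of $Q^\top A Q$, which has the same eigenvalues as $A$, and invokes Cauchy interlacing for principal submatrices (applied $n-p$ times, or in its general form). That route makes the link to interlacing explicit, while yours is more direct. Since interlacing is itself usually proved via Courant--Fischer, the two arguments differ mainly in packaging.
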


In other words, the congruence-like transformation \eqref{eq:congruence-like} modifies the spectrum of the matrices in a controlled way, which implies that the expressivity of this architecture does not benefit from depth for some choices of activation functions. 

\begin{proposition} \label{prop:stiefelExprLinear}
 Assume that the scalar activation function $\sigma : \R \to \R$ in \eqref{eq:matrix_activation_function} is the identity on some interval $\calI$, i.e.,  
\[  \sigma(x) = x \qquad \forall x \in \calI,\]
and let $\calX \subseteq \Spd(d) $ be the set of matrices whose spectrum is contained in $\calI$, i.e., 
\[  \calX = \{ X \in \Spd(d) : \calI_{\mathrm{spec}}(X) \subseteq \calI\}.\]
Consider the neural network $\phi$ defined  in \eqref{eq:NNarch}, assuming that $d^0 = d \geq d^1 \geq \dots \geq d^L$, that $W_{\ell}^{\top} \in \mathrm{St}(d^{\ell-1}, d^\ell)$ for all $\ell = 1, \dots, L$ in \eqref{eq:NNarch}, and that $g$ is the matrix activation function associated to $\sigma$ by \eqref{eq:matrix_activation_function}. Let us define the alternative neural network $\Tilde{\phi} : \Spd(d) \to \R^K : X^0 \mapsto \Tilde{\phi}(X^0)$ by
\begin{equation} \label{eq:NNarchShortStiefel}
    \begin{split}
        \tilde X^1 &= \Tilde{W}_L X^0 \Tilde{W}_L^{\top}, \\
        \Tilde{\phi}(X^0) &= f\left(\tilde X^1\right),
    \end{split}
\end{equation}
where $\Tilde{W}_L^{\top} = \left(W_L\dots W_2 W_1\right)^\top \in \mathrm{St}(d, d^L)$. Then,
\[
\phi(X) = \Tilde{\phi}(X), \qquad \forall X \in \calX.
\]
\end{proposition}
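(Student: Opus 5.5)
The approach is an induction on the depth $L$, in the same spirit as the proof of \Cref{prop:orthogonalExpr}, but with \Cref{lemma:poincare} carrying the argument instead of orthogonal invariance of matrix functions. It suffices to show that, for every $X^0 \in \calX$ and every $\ell = 1,\dots,L$, the following two properties hold:
\[ X^\ell = \Tilde{W}_\ell X^0 \Tilde{W}_\ell^\top \quad \text{and} \quad \calI_{\mathrm{spec}}(X^\ell) \subseteq \calI_{\mathrm{spec}}(X^0) \subseteq \calI,\]
where $\Tilde{W}_\ell = W_\ell \cdots W_1$. Applying $f$ then gives $\phi(X^0) = \Tilde{\phi}(X^0)$.

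First I will verify that $\Tilde{W}_\ell^\top \in \mathrm{St}(d, d^\ell)$. This follows because $\Tilde{W}_\ell \Tilde{W}_\ell^\top = W_\ell \cdots W_1 W_1^\top \cdots W_\ell^\top$ telescopes to $I_{d^\ell}$, since each $W_k W_k^\top = I_{d^k}$.

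Next, the key lemma is a single-layer fact. Let $Y \in \Spd(d^{\ell-1})$ have spectrum in $\calI$, and let $W_\ell^\top \in \mathrm{St}(d^{\ell-1},d^\ell)$. Applying \Cref{lemma:poincare} with $A = Y$ and $B = W_\ell^\top$ gives $\calI_{\mathrm{spec}}(W_\ell Y W_\ell^\top) \subseteq \calI_{\mathrm{spec}}(Y) \subseteq \calI$. The matrix $W_\ell Y W_\ell^\top$ is SPD, because $W_\ell$ has full row rank. Writing its eigendecomposition $U\Lambda U^\top$, every diagonal entry of $\Lambda$ lies in $\calI$, so \eqref{eq:matrix_activation_function} gives $g(\Lambda) = \Lambda$. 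Hence $g(W_\ell Y W_\ell^\top) = W_\ell Y W_\ell^\top$, which means $g$ acts as the identity on this layer.

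The induction then runs as follows. For the base case $\ell = 1$, apply the lemma with $Y = X^0 \in \calX$. For the inductive step, assume $X^{\ell-1} = \Tilde{W}_{\ell-1} X^0 \Tilde{W}_{\ell-1}^\top$ with spectrum in $\calI$. The lemma gives $X^\ell = W_\ell X^{\ell-1} W_\ell^\top = \Tilde{W}_\ell X^0 \Tilde{W}_\ell^\top$, and its spectral interval stays inside $\calI_{\mathrm{spec}}(X^{\ell-1}) \subseteq \calI$.

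The only delicate point is propagating the spectral containment through the layers, since the identity property of $\sigma$ holds only on $\calI$. This is exactly what Poincaré's theorem provides. It needs no convexity assumption on $\calI$, because $\calI_{\mathrm{spec}}$ is itself an interval contained in $\calI$, and the nested intervals $\calI_{\mathrm{spec}}(X^\ell) \subseteq \cdots \subseteq \calI_{\mathrm{spec}}(X^0)$ keep every eigenvalue in $\calI$. Everything else is routine bookkeeping.
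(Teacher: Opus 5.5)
Your proof is correct and follows the paper's argument. Both proceed by induction on depth, using Poincar\'e's separation theorem to keep each layer's spectrum inside $\calI_{\mathrm{spec}}(X^0) \subseteq \calI$, so that $g$ acts as the identity and the congruences compose into $\Tilde{W}_L X^0 \Tilde{W}_L^\top$. Your version carries the spectral containment explicitly in the induction hypothesis, whereas the paper's inductive step uses it only implicitly, so yours is slightly more careful.
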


\begin{proof}
    As for \Cref{prop:orthogonalExpr}, it is sufficient to prove that for all $X^0 \in \calX$, 
        \[ X^L = \tilde X^1, \]
    where $X^L$ and $\tilde X^1$ are defined in \eqref{eq:NNarch} and \eqref{eq:NNarchShortStiefel}, respectively. The proof is by induction on $L$. We first check the result for $L=1$. Let $X^0 \in \calX$, and let $W^\top \in \mathrm{St}(d,d^1)$. Then, by definition of $\mathcal{X}$, $\calI_{\mathrm{spec}}\left(X^0\right) \subseteq \calI$, and by \Cref{lemma:poincare}, 
    $$\calI_{\mathrm{spec}}\left(WX^0W^\top\right) \subseteq \calI_{\mathrm{spec}} \left(X^0\right) \subseteq \calI.$$  
    This implies that $g\left(WX^0W^\top\right) = WX^0W^\top$, which proves the claim for $L=1$. Assuming that the claim holds for a depth $L$, we prove that it holds for depth $L+1$. Since, by \Cref{lemma:poincare}, 
    $$\calI_{\mathrm{spec}}\left(W_{L+1}X^LW_{L+1}^{\top}\right) \subseteq \calI_{\mathrm{spec}} \left(X^L\right) \subseteq \calI,$$      
    there holds
    \begin{equation*}
            \begin{split}
                X^{L+1} &= g\left(W_{L+1} X^L W_{L+1}^{\top}\right) \\
                &= W_{L+1} X^L W_{L+1}^{\top} \\
                &= \Tilde{W}_{L+1} X^0 \Tilde{W}_{L+1}^{\top}. \qedhere
            \end{split}
        \end{equation*} 
\end{proof}

The following result extends this proposition to an activation function $\sigma : \R \to \R$ whose image is contained in the interval $\calI$ on which it operates as the identity (as the ReLU activation). In this case, all layers except the first one collapse to a single congruence-like transformation of the form \eqref{eq:congruence-like} for all $X \in \Spd\left(d\right)$.

\begin{corollary} \label{corr:rectangular}
    Let the assumptions of \Cref{prop:stiefelExprLinear} hold, and assume that the image of $\sigma$ is included in the interval $\calI$ on which its acts as the identity (note that such a $\sigma$ is a special case of idempotent function). Let $\Tilde{\phi} : \mathrm{SPD}(d) \to \R^K : X^0 \mapsto \Tilde{\phi}(X^0)$  be such that
     \begin{equation*}
         \begin{split}
             \tilde X^1 &=  \Tilde{W}_L g\left(W_1X^0W_{1}^{\top}\right)\Tilde{W}_{L}^{\top}, \\
             \Tilde{\phi}(X^0) &= f\left(\tilde X^1\right),
         \end{split}
     \end{equation*} 
    with $\Tilde{W}_{L}^{\top} = \left(W_{L}\dots W_2W_1\right)^\top \in \mathrm{St}\left(d^2,d^L\right)$ and $L \geq 2$. Then
     \[ \phi\left(X\right) = \Tilde{\phi}\left(X\right) \qquad \forall X \in \Spd(d). \]
\end{corollary}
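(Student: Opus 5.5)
The plan is to reduce to \Cref{prop:stiefelExprLinear} applied to the network made of layers $2,\dots,L$, with input $X^1 = g(W_1 X^0 W_1^\top)$. For the dimensions in $\tilde X^1$ to be consistent, with $\Tilde{W}_L^\top \in \mathrm{St}(d^1,d^L)$ acting on the $d^1\times d^1$ matrix $g(W_1X^0W_1^\top)$, I read $\Tilde{W}_L$ as the product $W_L\cdots W_2$ of the weights of layers $2,\dots,L$. Only these factors act after the first activation; $W_1$ already appears inside $g$. I will write the proof with this reading.

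\textbf{Step 1: the first layer lands in $\calX$.} Fix an arbitrary $X^0\in\Spd(d)$, with no spectral restriction. The first layer gives $X^1 = g(W_1X^0W_1^\top) = U\sigma(\Lambda)U^\top$. Each eigenvalue of $X^1$ is $\sigma(\lambda)$ for some real $\lambda$, so by the assumption on the image of $\sigma$ it lies in $\calI$. Hence $\calI_{\mathrm{spec}}(X^1)\subseteq\calI$, because $\calI$ is an interval and therefore contains the convex hull of the eigenvalues. Since the SPD property is preserved as in \eqref{eq:NNarch}, $X^1$ belongs to the analogue of $\calX$ in dimension $d^1$.

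\textbf{Step 2: collapse the remaining layers.} Consider the depth-$(L-1)$ network with input $X^1$, weights $W_2,\dots,W_L$ satisfying $W_\ell^\top\in\mathrm{St}(d^{\ell-1},d^\ell)$, and the same classifier $f$. \Cref{prop:stiefelExprLinear}, or directly the induction in its proof based on \Cref{lemma:poincare}, applies with $d$ replaced by $d^1$ and input set $\{X\in\Spd(d^1):\calI_{\mathrm{spec}}(X)\subseteq\calI\}$. It gives
\[
X^L = (W_L\cdots W_2)\,X^1\,(W_L\cdots W_2)^\top .
\]
The product satisfies $(W_L\cdots W_2)^\top\in\mathrm{St}(d^1,d^L)$, since each semi-orthogonality condition $W_\ell W_\ell^\top = I$ telescopes. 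Substituting $X^1 = g(W_1X^0W_1^\top)$ yields $X^L=\tilde X^1$, and applying $f$ gives $\phi(X^0)=\Tilde{\phi}(X^0)$. The case $L\ge 2$ is needed only so that at least one layer remains after the first.

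\textbf{Main obstacle.} The main point is Step 1: recognising that the image condition on $\sigma$ removes the need for the restriction $X^0\in\calX$ from \Cref{prop:stiefelExprLinear}. One small check is that \Cref{prop:stiefelExprLinear} is stated for input dimension $d$ but is used here in dimension $d^1$. Its proof is dimension-agnostic, so I would simply invoke its induction step verbatim. The other care point is the indexing of $\Tilde{W}_L$ noted above.
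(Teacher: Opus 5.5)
Your proof is correct and matches the paper's argument. The image condition on $\sigma$ gives $\calI_{\mathrm{spec}}(X^1)\subseteq\calI$ for every $X^0\in\Spd(d)$, and \Cref{prop:stiefelExprLinear} is then applied to layers $2,\dots,L$ with $X^1$ in place of $X^0$. Your reading $\Tilde{W}_L = W_L\cdots W_2$ with $\Tilde{W}_L^\top\in\mathrm{St}(d^1,d^L)$ is the only dimensionally consistent interpretation of the statement's $\Tilde{W}_L$.
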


\begin{proof}
    This follows from \Cref{prop:stiefelExprLinear}, replacing $X^0$ by $X^1$, whose spectrum belongs to $\calI$ due to the assumption on $\sigma$.
\end{proof}

To conclude this section, let us mention that the redundancy of the ReEig layers in SPDNet with (semi-)orthogonal weight matrices was numerically observed in~\cite[Sec.~3.6]{wilsonDeepRiemannianNetworks2025a}.

\section{Interaction between the feature extractor block and the final classifier} \label{sec:classification}

This last section addresses the choice of the final classifier $f$ in \eqref{eq:NNarch}. In the SPDNet architecture, $f$ is a Euclidean classifier applied to the output of the transformation $X^L \mapsto \log \left(X^L\right)$, where $\log$ is the principal logarithm\footnote{The principal logarithm is defined as $\log (A) \coloneq U  \mathrm{diag} \left(  \log \lambda_1, \dots, \log\lambda_{n} \right) U^\top$, for $ A = U \mathrm{diag} \left( \lambda_1, \dots, \lambda_{n}\right)U^\top$ an eigenvalue decomposition.}. An alternative strategy\footnote{Note that SPDNet is directly related to the log-Euclidean metric, as Euclidean distances between the outputs of the $\log$ operator coincide with log-Euclidean distances between the original SPD matrices.} is to classify directly the extracted features using a shallow classifier on the manifold $\Spd(d^L)$, e.g., the Minimum Riemannian Distance to Mean (MRDM) classifier used in \cite{Barachant2012,Massart2017Inductive,cabanes2019}. This classifier assigns each point to the class whose Fréchet mean is the closest, and relies on a notion of distance between two SPD matrices. In this section, we discuss the compatibility between the  choice of the distance and the feature extraction block. We show that several celebrated distances between SPD matrices are invariant under congruence with orthogonal/invertible matrices, and classifiers relying on these distances may thus be less indicated when using the congruence-like transformation \eqref{eq:congruence-like} for feature extraction.

\begin{definition} \label{def:allDistances}
 We recall the definitions of the following classical distances/divergences on  $\mathrm{SPD}(n)$.
\begin{itemize}
    \item The (classical) affine-invariant distance (AI, see  \cite{pennec_riemannian_2006}) between $A, B\in \mathrm{SPD}(n)$ is defined by
        \[ \delta_{\mathrm{AI}} \left(A,B\right) = \left\|\log \left(A^{-\frac{1}{2}}B A^{-\frac{1}{2}}\right) \right\|_F. \]
    \item The log-Euclidean distance (LE,  see \cite{arsigny_geometric_2007}) between $A,B \in \mathrm{SPD}(n)$ is defined by
        \begin{equation*}
            \delta_{\textrm{LE}} \left(A,B\right) = \left\|\log \left(A\right) - \log \left(B\right) \right\|_F.
        \end{equation*}
    \item The Stein divergence (S, see \cite{sra_new_2012}) between $A,B \in \mathrm{SPD}(n)$ is defined as
        \begin{equation*}
            \delta_{\textrm{S}} \left(A,B\right) = \left(\log \det \left(\frac{A+B}{2}\right) - \frac{1}{2} \log \det \left(AB\right) \right)^{\frac{1}{2}}. 
        \end{equation*}
    \item The Bures-Wasserstein distance (BW, see \cite{MasAbs2020})  between $A,B \in \mathrm{SPD}(n)$ is defined as\footnote{\text{The matrix square root is defined as $A ^{\frac{1}{2}} \coloneq U \textrm{diag} \left(\sqrt{\lambda_1}, \dots, \sqrt{\lambda_n} \right) U^\top$,} \text{with $U \textrm{diag} \left(\lambda_1, \dots, \lambda_n \right) U^\top$ and eigenvalue decomposition.}}
    \begin{equation*}
        \delta_{\mathrm{BW}} \left(A,B\right) = \left(\tr \left(A\right) + \tr \left(B\right) - 2 \tr \left( A^{\frac{1}{2}}BA^{\frac{1}{2}} \right)^{\frac{1}{2}} \right)^{\frac{1}{2}}. 
    \end{equation*}
    \item Finally, we recall the Euclidean (E) distance between $A,B \in \mathrm{SPD}(n)$, which is simply defined as
    \begin{equation*}
        \delta_{\mathrm{E}} \left(A,B\right) = \| A - B \|_\F.
    \end{equation*}
\end{itemize}
\end{definition}

We next discuss the (possibly lack of) invariances of these distances under congruence transformations with orthogonal and invertible weight matrices, respectively, and comment on the consequences on the expressivity of the feature extraction blocks relying on the congruence-like transformation \eqref{eq:congruence-like}, when combined with a simple (e.g., MRDM-like) classifier; see \Cref{tab:metrics}. 

\paragraph{Congruence with orthogonal weight matrices} we show that, for all the distances and divergences in \Cref{def:allDistances}, there holds: 
\begin{equation} \label{eq:invar_congr}
    \delta \left(A,B\right) = \delta \left(WAW^\top,WBW^\top\right)
\end{equation} for all $ A,B \in \Spd(n)$ and for all $W \in \calO(n)$. This result is well-known for the log-Euclidean and Euclidean distances, see  \cite[Prop. 3.11]{arsigny_geometric_2007}, and \cite[Section 4.2]{Bhatia1997MatrixAnalysis}, respectively. The affine-invariant distance and the Stein divergence are invariant under congruence with arbitrary (invertible) matrices: for all $W \in \mathrm{GL}(n)$ and $A,B \in \mathrm{SPD}(n)$, $\delta \left(A,B\right) = \delta \left(WAW^\top,WBW^\top\right)$. These results can be found in \cite[Sec. 3.2]{pennec_riemannian_2006} and in \cite[Prop. 2.3]{sra_new_2012} for the affine-invariant distance and the Stein divergence, respectively. As a result, invariance under congruence also trivially holds when $W$ is orthogonal. Finally, the invariance of the Bures-Wasserstein distance is a consequence of the invariance under similarity of the trace and matrix square root.

There follows that, regardless of the nonlinearity, the congruence-like layers \eqref{eq:congruence-like}, with orthogonal weight matrices, do not result in any gain of expressivity: they preserve both the spectrum of the matrices (hence, do not affect the result of successive activation functions, see \Cref{prop:orthogonalExpr}), and the distance separating any two points from the dataset.

\paragraph{Congruence with arbitrary non-singular matrices}
 For $W \in \mathrm{GL}(n)$, \eqref{eq:invar_congr} does not hold for the log-Euclidean, Bures-Wasserstein and Euclidean distances. 
 As stated above, \eqref{eq:invar_congr} still holds for the affine-invariant distance and the Stein divergence, so that, for these two distances, congruence-like layers \eqref{eq:congruence-like} do not allow to separate points from different classes when used in isolation. The expressivity gains of these layers are then limited to the resulting modifications of the spectrum of the matrices before the application of the nonlinearity in the feature extraction block.

\begin{table}
\centering
\begin{tabular}{l|c|c|c|c|c}
\hline
  & \textbf{AI} & \textbf{LE} & \textbf{Stein}& \textbf{BW} & \textbf{E}  \\
\hline
$W \in \mathrm{GL}(n)$ & Yes  & No & Yes & No &  No  \\
$W \in \calO(n)$  & Yes   & Yes & Yes &  Yes & Yes  \\
\hline
\end{tabular}
\vspace{3pt}
\caption{Validity of \eqref{eq:invar_congr} for the distances considered.}
\label{tab:metrics}
\end{table}

\section{Conclusion}

We explored DNN architectures for SPD matrices that involve a structure-preserving feature extraction block made of congruence-like layers and nonlinear matrix functions, followed by a final classifier. We showed that imposing (semi)-orthogonal constraints on congruence weights can collapse deep architectures to one-hidden-layer networks for certain activation function, reducing expressivity. We also noted that several common distances between SPD matrices are invariant under congruences, preventing the resulting classifiers to separate data from different classes. Future work could explore the design of new activation functions to improve expressivity.

\section*{Acknowledgment}

This work was supported by the \emph{GRAVIT-AI} Concerted Research Action (ARC) of the Fédération Wallonie-Bruxelles. E. M. work is also partly funded by the FRS-FNRS Research Project NTTN (grant number TW02223).

\bibliographystyle{plain} 
\bibliography{refs}

@book{higham_functions_2008,
	title = {Functions of {Matrices}},
	url = {https://epubs.siam.org/doi/abs/10.1137/1.9780898717778},
	publisher = {SIAM},
	author = {Higham, N. J.},
	year = {2008},
	doi = {10.1137/1.9780898717778},
}

@inproceedings{classicalSPDNET,
author = {Huang, Z. and Van Gool, L.},
title = {A {R}iemannian network for {SPD} matrix learning},
year = {2017},
booktitle = {Proceedings of the 31st AAAI Conf. on Artificial Intelligence},
pages = {2036–2042},
numpages = {7},
}

@article{list_of_activation_functions,
  title={Activation Functions: Comparison of Trends in Practice and Research for Deep Learning},
  author={Nwankpa, C. and Ijomah, W. and Gachagan, A. and Marshall, S.},
  journal={arXiv preprint arXiv:1811.03378},
  year={2018}
}

@book{bellman_introduction_1997,
	title = {Introduction to {Matrix} {Analysis}, {Second} {Edition}},
	url = {https://epubs.siam.org/doi/abs/10.1137/1.9781611971170},
	publisher = {SIAM},
	author = {Bellman, R.},
	year = {1997},
	doi = {10.1137/1.9781611971170},
}

@article{arsigny_geometric_2007,
	title = {Geometric {Means} in a {Novel} {Vector} {Space} {Structure} on {Symmetric} {Positive}‐{Definite} {Matrices}},
	volume = {29},
	issn = {0895-4798},
	url = {https://epubs.siam.org/doi/10.1137/050637996},
	doi = {10.1137/050637996},
	number = {1},
	urldate = {2026-01-05},
	journal = {SIAM J. on Matrix Analysis and Applications},
	author = {Arsigny, V. and Fillard, P. and Pennec, X. and Ayache, N.},
	year = {2007},
	pages = {328--347},
}

@article{pennec_riemannian_2006,
	title = {A {Riemannian} {Framework} for {Tensor} {Computing}},
	volume = {66},
	issn = {1573-1405},
	url = {https://doi.org/10.1007/s11263-005-3222-z},
	doi = {10.1007/s11263-005-3222-z},
	number = {1},
	journal = {International J. of Computer Vision},
	author = {Pennec, X. and Fillard, P. and Ayache, N.},
	year = {2006},
	pages = {41--66},
}

@article{harandi_dimensionality_2018,
	title = {Dimensionality {Reduction} on {SPD} {Manifolds}: {The} {Emergence} of {Geometry}-{Aware} {Methods}},
	volume = {40},
	issn = {1939-3539},
	shorttitle = {Dimensionality {Reduction} on {SPD} {Manifolds}},
	url = {https://ieeexplore.ieee.org/abstract/document/7822908},
	doi = {10.1109/TPAMI.2017.2655048},
	number = {1},
	journal = {IEEE Tr. on Pattern Analysis and Machine Intelligence},
	author = {Harandi, M. and Salzmann, M. and Hartley, R.},
	year = {2018},
	pages = {48--62},
}

@article{Barachant2012,
  author    = {Barachant, A. and Bonnet, S. and Congedo, M. and Jutten, C.},
  title     = {Multiclass brain-computer interface classification by {R}iemannian geometry},
  journal   = {IEEE Tr. on Biomedical Engineering},
  year      = {2012},
  volume    = {59},
  number    = {4},
  pages     = {920--928},
  doi       = {10.1109/TBME.2011.2172210},
  pmid      = {22010143},
}

@inproceedings{mri,
author = {Jayasumana, S. and Hartley, R. and Salzmann, M. and Li, H. and Harandi, M.},
title = {Kernel {Methods} on the {Riemannian} {Manifold} of {Symmetric} {Positive} {Definite} {Matrices}},
year = {2013},
isbn = {9780769549897},
url = {https://doi.org/10.1109/CVPR.2013.17},
doi = {10.1109/CVPR.2013.17},
booktitle = {Proceedings of the IEEE Conf. on Computer Vision and Pattern Recognition},
pages = {73–80},
numpages = {8},
}

@INPROCEEDINGS{visual_reco_wang,
  author={Wang, R. and Guo, H. and Davis, L. S. and Dai, Q.},
  booktitle={Proceedings of the IEEE Conf. on Computer Vision and Pattern Recognition}, 
  title={Covariance discriminative learning: A natural and efficient approach to image set classification}, 
  year={2012},
  volume={},
  number={},
  pages={2496-2503},
  doi={10.1109/CVPR.2012.6247965}}

@INPROCEEDINGS{action_reco_ionescu,
  author={Ionescu, C. and Carreira, J. and Sminchisescu, C.},
  booktitle={Proceedings of the IEEE Conf. on Computer Vision and Pattern Recognition}, 
  title={Iterated Second-Order Label Sensitive Pooling for 3D Human Pose Estimation}, 
  year={2014},
  volume={},
  number={},
  pages={1661-1668},
  doi={10.1109/CVPR.2014.215}}

@InProceedings{videoSurveillance,
author="Tosato, D.
and Farenzena, M.
and Spera, M.
and Murino, V.
and Cristani, M.",
title="Multi-class {Classification} on {Riemannian} {Manifolds} for {Video} {Surveillance}",
booktitle="Proceedings of the 11th European Conf. on Computer Vision",
year="2010",
pages="378--391",
isbn="978-3-642-15552-9"
}

@book{Bhatia1997MatrixAnalysis,
  title     = {Matrix Analysis},
  author    = {Bhatia, R.},
  series    = {Graduate Texts in Mathematics},
  volume    = {169},
  publisher = {Springer},
  year      = {1997},
  isbn      = {978-0-387-94846-1},
  doi       = {10.1007/978-1-4612-0653-8}
}

@inproceedings{Massart2017Inductive,
  author    = {E. Massart and S. Chevallier},
  title     = {Inductive Means and Sequences Applied to Online Classification of {EEG}},
  booktitle = {Proceedings of Geometric Science of Information},
  year      = {2017},
  doi       = {10.1007/978-3-319-68445-1_88},
  url       = {https://hal.archives-ouvertes.fr/hal-01711499}
}

@article{Tuzel2008RiemannianPedestrian,
  title   = {Pedestrian {Detection} via {Classification} on {Riemannian} {Manifolds}},
  author  = {T{\"u}zel, O. and Porikli, F. and Meer, P.},
  journal = {IEEE Tr. on Pattern Analysis and Machine Intelligence},
  volume  = {30},
  number  = {10},
  pages   = {1713--1727},
  year    = {2008},
  doi     = {10.1109/TPAMI.2008.75},
  pmid    = {18703826}
}

@INPROCEEDINGS{rkhqHarandi,
  author={Harandi, M. and Salzmann, M.},
  booktitle={Proceedings of the IEEE Conf. on Computer Vision and Pattern Recognition}, 
  title={Riemannian coding and dictionary learning: {Kernels} to the rescue}, 
  year={2015},
  volume={},
  number={},
  pages={3926-3935},
  doi={10.1109/CVPR.2015.7299018}}

@inproceedings{brooks_riemannian_2019,
author = {Brooks, D. and Schwander, O. and Barbaresco, F. and Schneider, J.-Y. and Cord, M.},
title = {Riemannian batch normalization for {SPD} neural networks},
year = {2019},
booktitle = {Proceedings of the 33rd International Conf. on Neural Information Processing Systems},
}

@ARTICLE{Chakraborty2018ManifoldNetAD,
  author={Chakraborty, R. and Bouza, J. and Manton, J. H. and Vemuri, B. C.},
  journal={IEEE Tr. on Pattern Analysis and Machine Intelligence}, 
  title={{ManifoldNet}: A {Deep} {Neural} {Network} for {Manifold}-{Valued} {Data} With {Applications}}, 
  year={2022},
  volume={44},
  number={2},
  pages={799-810},
  doi={10.1109/TPAMI.2020.3003846}}

@inproceedings{nguyen_matrix_2024,
title = "Matrix Manifold Neural Networks++",
author = "Nguyen,X.S and Yang, S. and Histace, A.",
year = "2024",
booktitle = "Proceedings of the 12th International Conf. on Learning Representations",
}

@inproceedings{nguyen_building_2023,
	title = {Building {Neural} {Networks} on {Matrix} {Manifolds}: {A} {Gyrovector} {Space} {Approach}},
	shorttitle = {Building {Neural} {Networks} on {Matrix} {Manifolds}},
	url = {https://proceedings.mlr.press/v202/nguyen23f.html},
	language = {en},
	urldate = {2026-01-12},
	booktitle = {Proceedings of the 40th {International} {Conf.} on {Machine} {Learning}},
	author = {Nguyen, X. S. and Yang, S.},
	year = {2023},
}

@Book{boumal2023intromanifolds,
  title     = {An introduction to optimization on smooth manifolds},
  author    = {Boumal, N.},
  publisher = {Cambridge University Press},
  year      = {2023},
  url       = {https://www.nicolasboumal.net/book},
  doi       = {10.1017/9781009166164}
}

@inproceedings{lopez_vector-valued_2021,
	title = {Vector-valued distance and gyrocalculus on the space of symmetric positive definite matrices},
	isbn = {978-1-7138-4539-3},
	urldate = {2026-01-23},
	booktitle = {Proceedings of the 35th {International} {Conf.} on {Neural} {Information} {Processing} {Systems}},
	author = {López, F. and Pozzetti, B. and Trettel, S. and Strube, M. and Wienhard, A.},
	year = {2021},
}

@inproceedings{horev2015,
	title = {Geometry-aware principal component analysis for symmetric positive definite matrices},
	author = {I. Horev and F. Yger and M. Sugiyama},
	year = {2022},
	pages = {1--16},
	booktitle = {Proceedings of the Asian Conf. on Machine Learning},
}

@inproceedings{cabanes2019,
	title = {Toeplitz {Hermitian} positive definite matrix machine learning based in {Fisher} metric},
	author = {Y. Cabanes and F. Barbaresco and M. Arnaudon and J. Bigot},
	year = {2019},
	booktitle = {Proceedings of Geometric Science of Information},
}

@article{said_2017,
	author = {S. Said and L. Bombrun and Y. Berthoumieu and J. H. Manton},
	title = {{Riemannian Gaussian} Distributions on the {Space} of {Symmetric} {Positive} {Definite} {Matrices}},
    journal = {{IEEE} Tr. on Information Theory},
	volume = {63},
	year = {2017},
	pages = {2153--2170},
}

@inproceedings{steinert2025,
	title = {{Universal} {Kernels} via {Harmonic} {Analysis} on {Riemannian} {Symmetric} {Spaces}},
	author = {F. Steinert and S. Said and C. Mostajeran},
	year = {2025},
	booktitle = {Proceedings of Geometric Science of Information},
}

@article{bronstein_2017,
	author = {M. M. Bronstein and J. Bruna and Y. LeCun and A. Szlam and P. Vandergheynst},
	title = {Geometric {Deep} {Learning}: Going beyond {Euclidean} data},
    journal = {{IEEE} Signal Processing Magazine},
	volume = {34},
    number = {3},
	year = {2017},
	pages = {18--42},
}

@inproceedings{sra_new_2012,
	title = {A new metric on the manifold of kernel matrices with application to matrix geometric means},
	booktitle = {Proceedings of the 26th Conf. on Neural Information Processing Systems},
	author = {Sra, S.},
	year = {2012},
}

@inproceedings{arora_2019,
	author = {S. Arora and N. Cohen and W. Hu and Y. Luo},
	title = {{I}mplicit {R}egularization in {D}eep {M}atrix {F}actorization},
	booktitle = {Proceedings of the 33rd Conf. on Neural Information Processing Systems},
	year = {2019},
}

@article{MasAbs2020,
author = {Massart, E. and Absil, P.-A.},
title = {Quotient Geometry with Simple Geodesics for the Manifold of Fixed-Rank Positive-Semidefinite Matrices},
journal = {SIAM J. on Matrix Analysis and Applications},
volume = {41},
number = {1},
pages = {171-198},
year = {2020},
doi = {10.1137/18M1231389},
}

@InProceedings{arora18,
  title = 	 {On the Optimization of Deep Networks: Implicit Acceleration by Overparameterization},
  author =       {Arora, S. and Cohen, N. and Hazan, E.},
  booktitle = 	 {Proceedings of the 35th International Conference on Machine Learning},
  year = 	 {2018},
  url = 	 {https://proceedings.mlr.press/v80/arora18a.html}
}

@article{WANG2023382_unet,
title = {{U}-{SPDN}et: {A}n {SPD} manifold learning-based neural network for visual classification},
journal = {Neural Networks},
volume = {161},
pages = {382-396},
year = {2023},
doi = {https://doi.org/10.1016/j.neunet.2022.11.030},
url = {https://www.sciencedirect.com/science/article/pii/S0893608022004713},
author = {R. Wang and X.-J. Wu and T. Xu and C. Hu and J. Kittler}
}

@InProceedings{Wang_2022_ACCV,
    author    = {Wang, R. and Wu, X.-J. and Chen, Z. and Xu, T. and Kittler, J.},
    title     = {Dream{N}et: A {D}eep {R}iemannian {M}anifold {N}etwork for {SPD} {M}atrix {L}earning},
    booktitle = {Proceedings of the Asian Conf. on Computer Vision},
    year      = {2022},
    pages     = {3241-3257}
}

@article{wilsonDeepRiemannianNetworks2025a,
	title = {Deep {Riemannian} {Networks} for end-to-end {EEG} decoding},
	volume = {3},
	issn = {2837-6056},
	url = {https://doi.org/10.1162/imag_a_00511},
	doi = {10.1162/imag_a_00511},
	journal = {Imaging Neuroscience},
	author = {Wilson, D. and Schirrmeister, R. T. and Gemein, L. A. W. and Ball, T.},
	year = {2025},
}

@inproceedings{boucherie:hal-05600912,
  TITLE = {{SPDNet-AE: a Compact SPD Representation through Riemannian Autoencoding}},
  AUTHOR = {Boucherie, C. and de Surrel, T. and Yger, F.},
  URL = {https://hal.science/hal-05600912},
  BOOKTITLE = {{34th European Symposium on Artificial Neural Networks}},
  YEAR = {2026},
  HAL_ID = {hal-05600912},
  HAL_VERSION = {v1},
}

\end{document}